%% file: arxiv.tex
\documentclass[letterpaper]{article}
\usepackage[preprint]{aaai2027}
\usepackage[hyphens]{url}
\usepackage{graphicx}
\usepackage{natbib}
\usepackage{caption}
\usepackage{algorithm}
\usepackage{algorithmic}
\usepackage{booktabs}
\usepackage{colortbl}
\usepackage{amsmath}
\usepackage{amssymb}
\usepackage{mathtools}
\usepackage{amsthm}
\usepackage{multirow}
\usepackage{microtype}
\newcommand{\E}{\mathbb{E}}
\definecolor{bestgray}{gray}{0.93}
\theoremstyle{plain}
\newtheorem{theorem}{Theorem}[section]
\newtheorem{proposition}[theorem]{Proposition}
\theoremstyle{definition}
\newtheorem{condition}[theorem]{Condition}
\theoremstyle{remark}
\newtheorem{remark}[theorem]{Remark}
\title{HYVINT: Intensity-Driven Hypergraph Generation with Variational Embeddings}
\author{Xinyi Hong\textsuperscript{1}, Shuntuo Xu\textsuperscript{2}, Zhou Yu\textsuperscript{2}\corresponding}
\affiliations{\textsuperscript{1}School of Mathematical Sciences, Shanghai Jiao Tong University\\ \textsuperscript{2}School of Statistics, East China Normal University\\ xinyi.hong@sjtu.edu.cn, oaksword@163.com, zyu@stat.ecnu.edu.cn}

\begin{document}
\maketitle

\begin{abstract}
Hypergraphs provide a principled framework for modeling polyadic interactions, with applications in recommendation systems, social networks, and molecular modeling. Hypergraph generation remains challenging because incidence structures are discrete, sparse, and governed by heterogeneous higher- order interactions. Existing generators often rely on implicit latent spaces or continuous incidence decoders, which provide limited mechanistic interpretation of how node-hyperedge incidences arise. To address these limitations, we propose HYVINT, an intensity-driven hypergraph generative framework. Our key innovations are twofold: (i) we develop an intensity-driven incidence formation mechanism for hypergraphs that links latent interaction strength to binary incidence, and (ii) we derive a tractable lower-bound variational estimator for learning latent representations. We provide generation error bounds with asymptotic convergence rates and empirically show that HYVINT achieves strong fidelity while maintaining substantial novelty and diversity on synthetic and real-world hypergraphs.
\end{abstract}

\section{Introduction}
\label{sec:int}
Hypergraphs provide a powerful topological framework for modeling higher-order interactions across diverse domains, from social network analysis to biological computing~\cite{zhou2006learning,kajino2019molecular,alvarez2021evolutionary,xia2022hypergraph,li2023scmhnn, meng2024link, ma2025hypergraph}. However, in the study of hypergraphs, generative modeling remains a frontier challenge~\cite{antelmi2023survey}. Unlike ordinary graphs, hypergraphs are characterized by intrinsic discreteness, extreme sparsity, and high-order structural heterogeneity~\cite{bretto2013hypergraph, zhang2023higher, dumitriu2025partial}, which make it difficult for traditional generative models to faithfully capture their structures~\cite{kim2024survey}. Furthermore, challenge lies in not only fitting structure but also accounting for the mechanisms through which hypergraphs emerge~\cite{lee2025survey}. Recently, AI and probabilistic modeling approaches are increasingly being explored to tackle the challenges, and they can be grouped into three paradigms.

\textbf{Adversarial and Optimization-based Approaches.} Early deep generative models primarily framed hypergraph generation as an adversarial learning problem or a joint structure-predictor optimization task. Works such as HGGAN~\cite{pan2021characterization} and MRL-AHF~\cite{zuo2021multimodal} adapt Generative Adversarial Networks (GANs) to synthesize multimodal connectivity, utilizing hyperedge-level inductive biases to align distributions. Parallel efforts in structure learning, like those by ~\citet{cai2022hypergraph}, treat generation as a byproduct of optimizing topology for downstream robustness. While these methods established the feasibility of deep generation, they rely heavily on implicit neural parameterization and minimax games, which are notoriously difficult to stabilize and lack transparent generative mechanics.

\textbf{Deep Probabilistic and Latent Space Models.} To improve scalability and fidelity, the field has consolidated around probabilistic frameworks that decompose the generation process or utilize diffusion processes. Autoregressive models like DGMH~\cite{zuo2023generating} factorize the likelihood into sequential decisions, while projection-based methods like HyperPLR~\cite{wenhyperplr} map hypergraphs to weighted graphs for reconstruction. More recently, diffusion-based generators have become dominant. For instance, HYGENE~\cite{gailhard2025hygene} operates on bipartite representations, and DDE~\cite{wu2025denoising} performs denoising within a low-dimensional embedding space to exploit low-rank structures. Crucially, to handle the discrete nature of hypergraphs, these high-fidelity methods predominantly rely on mapping discrete connections into continuous Euclidean embeddings to apply standard diffusion or score matching techniques.

\begin{figure*}[t]
    \centering
    \includegraphics[width=1\linewidth]{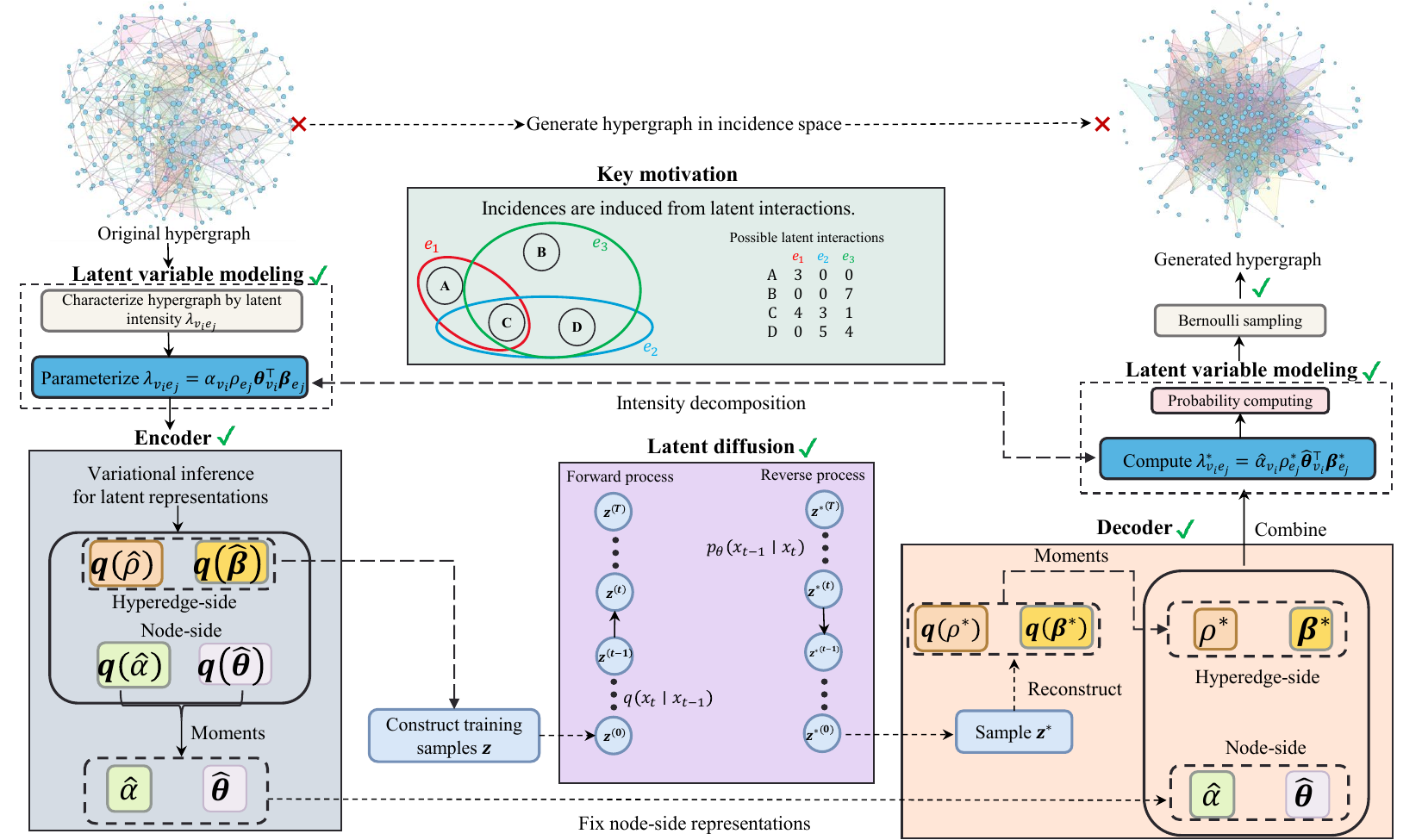}
    \caption{\textbf{The framework of HYVINT.}}
    \label{fig:1}
\end{figure*}

\textbf{LLM-driven Semantic Generation.} The most recent paradigm shifts focus toward semantic control by integrating Large Language Models (LLMs). Approaches such as LLMHG~\cite{chu2024llm} and HyperLLM~\cite{gu2025modeling} leverage the reasoning capabilities of LLMs to construct explainable hyperedges or simulate multi-agent formation processes. While these methods introduce valuable semantic interpretability and zero-shot capabilities, they typically operate by prompting for high-level patterns rather than strictly adhering to topological distributions. Consequently, they often struggle with precise constraint satisfaction as scale increases, suggesting that semantic guidance alone cannot replace a rigorous structural generative backbone.

However, most of the above methods remain subject to significant limitations. On the one hand, neural-based generators often suffer from optimization instability and offer limited interpretability~\cite{guo2022systematic,yang2025recent}. On the other hand, probabilistic modeling approaches, such as DDE~\cite{wu2025denoising}, commonly model hypergraph incidence structures using linear generators in Euclidean space~\cite{wu2024general}. More broadly, many latent-space formulations for hypergraphs are node-position-based or directly score incidences from continuous embeddings, without explicitly separating node-side activity, hyperedge-side activity, and their compatibility. Such modeling typically necessitates a continuous link function (e.g., a sigmoid) to map Euclidean scores to valid incidence probabilities. While convenient, mapping Euclidean scores to incidence probabilities via a link function typically induces sparsity and higher-order structure only indirectly through the link choice and regularization, instead of specifying an explicit intensity-driven incidence formation mechanism~\cite{kook2020evolution,li2025effective}.

We reframe the task of hypergraph generation by focusing on the underlying mechanism rather than the static incidence structure. Specifically, we employ latent variable modeling to induce the incidence structure. The intuition is drawn from real-world scenarios, such as co-authorship hypergraphs~\cite{choudhuri2025implicit}, where nodes represent authors and hyperedges represent papers. As Figure~\ref{fig:1} shows, we consider, for example, an illustrative case with authors ${A, B, C, D}$ and three papers ${e_1, e_2, e_3}$, where researcher $D$ is an author of $e_2$ and $e_3$, but not $e_1$. We view these incidences as induced from latent interactions. From this perspective, $D$'s absence from $e_1$ implies that the number of effective interactions (e.g., discussions, contributions) regarding $e_1$ was zero. In contrast, for $e_2$ and $e_3$, $D$ engaged in non-zero interactions, where a higher interaction frequency naturally correlates with a higher probability of co-authorship.

To formalize this intuition, we consider an intensity-driven latent variable modeling approach that explicitly decouples interaction intensity from binary incidence. Define the generative process as a chain $z \to \lambda \to x \to b$. Specifically, a prior $z \sim p(z)$ encodes latent structural embeddings, a mechanism $\lambda_{v_i e_j} \sim p_{\theta}(\lambda_{v_i e_j} \mid z)$ generates non-negative interaction intensities, and latent interaction counts are sampled as $x_{v_i e_j} \sim \mathrm{Poisson}(\lambda_{v_i e_j})$. The observed incidence is obtained by binarizing counts, i.e., $b_{v_i e_j}=\mathbb{I}[x_{v_i e_j}>0]$, indicating whether node $v_i$ participates in hyperedge $e_j$. This formulation provides the following advantages. First, it specifies how binary incidences arise from latent interaction strengths, rather than decoding them from unconstrained scores. Second, it provides a natural way to model sparse incidences through small interaction intensities. Given these considerations, we introduce HYVINT, an INTensity-driven HYpergraph generative model with structure-preserving Variational embeddings. Figure~\ref{fig:1} provides an overview of our framework. The key contributions are as follows:
\begin{itemize}
    \item We develop an intensity-driven incidence formation mechanism that models hypergraph incidences through latent interaction strengths and a Poisson-induced incidence link.
    \item We derive a tractable lower-bound variational estimator for learning structure-preserving node and hyperedge embeddings.
    \item We establish generation error bounds with asymptotic rates and empirically evaluate HYVINT in terms of fidelity, novelty, and diversity on synthetic and real-world hypergraphs.
\end{itemize}

\section{Methodology}
\label{sec:met}
\subsection{Preliminaries}
Let $\mathcal H=(\mathcal V_n,\mathcal E_m)$ be a hypergraph with $n$ nodes and $m$ hyperedges, represented by the incidence matrix $\mathbf B=(b_{v_i e_j})\in\{0,1\}^{n\times m}$, where $b_{v_i e_j}=\mathbb I[v_i\in e_j]$. Given $\mathbf B$, HYVINT generates $\widetilde{\mathcal H}=(\mathcal V_n,\widetilde{\mathcal E}_{\widetilde m})$ on the same nodes, equivalently $\widetilde{\mathbf B}\in\{0,1\}^{n\times\widetilde m}$.
\subsection{The Framework of HYVINT}
\label{sec:int2}
HYVINT consists of three stages, as summarized in Algorithm~\ref{alg:hyv}. To implement this three-stage procedure, we first specify an intensity-driven incidence formation mechanism. We posit a latent counting variable $\mathbf{X}=(x_{v_ie_j})\in\mathbb{N}^{n\times m}$ such that $x_{v_ie_j}\mid \lambda_{v_ie_j}\sim\mathrm{Poisson}(\lambda_{v_ie_j})$ and $b_{v_ie_j}=\mathbb{I}[x_{v_ie_j}\ge 1]$, where $\lambda_{v_ie_j}\in\mathbb{R}_{+}$ is an intensity parameter associated with node $v_i$ and hyperedge $e_j$. This construction yields the link
\begin{equation}
p(b_{v_ie_j}=1\mid \lambda_{v_ie_j})=1-\exp(-\lambda_{v_ie_j}),
\label{eq:lin}
\end{equation}
allowing us to generate $\mathbf{B}$ once $\mathbf{\Lambda}=(\lambda_{v_ie_j})$ is specified. In the sparse regime, small interaction intensities directly imply sparse incidences because $1-\exp(-\lambda_{v_ie_j})\approx \lambda_{v_ie_j}$ when $\lambda_{v_ie_j}$ is close to zero. We parameterize $\mathbf{\Lambda}$ by embedding nodes and edges into a $K$-dimensional latent space, where $K \ll \min\{n,m\}$. Specifically, let node embeddings be $\boldsymbol{\theta}_{v_i}=(\theta_{v_i1},\ldots,\theta_{v_iK})^{\top}\in\mathbb{R}_{+}^{K}$, node heterogeneity parameters be $\alpha_{v_i}\in\mathbb{R}_{+}$, edge embeddings be $\boldsymbol{\beta}_{e_j}=(\beta_{e_j1},\ldots,\beta_{e_jK})^{\top}\in\mathbb{R}_{+}^{K}$ and edge heterogeneity parameters be $\rho_{e_j}\in\mathbb{R}_{+}$. We model $\mathbf{\Lambda}$ as
\begin{equation}
\label{eq:lam}
\lambda_{v_ie_j}=\alpha_{v_i}\rho_{e_j}\boldsymbol{\theta}_{v_i}^{\top}\boldsymbol{\beta}_{e_j}.
\end{equation}
Here, $\alpha_{v_i}$ and $\rho_{e_j}$ serve as node-side and hyperedge-side activity factors, respectively, while $\boldsymbol{\theta}_{v_i}^{\top}\boldsymbol{\beta}_{e_j}$ defines a latent compatibility score between node $v_i$ and hyperedge $e_j$. Together, these quantities provide a regularized multiplicative decomposition of the induced intensity $\lambda_{v_ie_j}$. The factorization is used for embedding learning and generation, while our estimation target is the induced intensity matrix. Remark~\ref{rem:ide} gives details. In the first stage, we estimate the latent variables $\mathcal{Z}=\{\boldsymbol{\alpha},\boldsymbol{\rho},\boldsymbol{\theta},\boldsymbol{\beta}\}$ from $\mathbf{B}$ through the variational embedding method detailed in Section~\ref{sec:VI}. This stage yields posterior-mean estimates $\widehat{\alpha}_{v_i}$, $\widehat{\theta}_{v_ik}$, $\widehat{\rho}_{e_j}$, and $\widehat{\beta}_{e_jk}$ under Gamma variational factors. In the second stage, we keep the learned node-side quantities $\widehat{\alpha}$ and $\widehat{\boldsymbol{\theta}}$ fixed, and model the distribution of hyperedge-side variational embeddings. Specifically, we train a denoising diffusion model on the hyperedge variational parameters of $\{q(\rho_{e_j}),q(\boldsymbol{\beta}_{e_j})\}_{j=1}^{m}$. We train the model on $m$ samples $\{z_j\}_{j=1}^{m}$ defined as follows,
\begin{equation}
z_j=\Big[\log \hat a_{\rho_j},\ \log \hat b_{\rho_j},\ \{\log \hat a_{\beta_{jk}},\ \log \hat b_{\beta_{jk}}\}_{k=1}^{K}\Big]
\in\mathbb{R}^{2K+2}.
\label{eq:zj}
\end{equation}
After training, we draw $\{z_j^{\ast}\}_{j=1}^{\widetilde{m}}$ from the learned reverse process and decode them into Gamma parameters via elementwise exponentiation, thereby yielding $q(\rho_{e_j}^\ast)=\Gamma(a^\ast_{\rho_j},b^\ast_{\rho_j})$ and $q(\beta_{e_jk}^\ast)=\Gamma(a^\ast_{\beta_{jk}},b^\ast_{\beta_{jk}})$, where the estimated $\rho^{\ast}_{e_j}$ and $\beta^{\ast}_{e_jk}$ are obtained as posterior means. More details are provided in Section~\ref{sec:dif}. In the third stage, we decode the generated hyperedge-side embeddings into incidences. Combining $\widehat{\alpha}_{v_i}$, $\widehat{\boldsymbol{\theta}}_{v_i}$, $\rho^\ast_{e_j}$ and $\boldsymbol{\beta}^{\ast}_{e_j}$, we compute
\begin{equation}
\lambda^\ast_{v_ie_j}=\widehat{\alpha}_{v_i}\,\rho^\ast_{e_j}\,\widehat{\boldsymbol{\theta}}_{v_i}^{\top}\boldsymbol{\beta}^{\ast}_{e_j}.
\label{eq:sta}
\end{equation}
Using Eq.~(\ref{eq:lin}) and Eq.~(\ref{eq:sta}), we then perform Bernoulli sampling to obtain the generated incidence matrix $\widetilde{\mathbf{B}}$, equivalently yielding the generated hypergraph $\widetilde{\mathcal{H}}$. Algorithm~\ref{alg:hyv} presents the complete procedure.

\begin{algorithm}[t]
\caption{HYVINT for hypergraph generation}
\label{alg:hyv}
\begin{algorithmic}[1]
\STATE {\bfseries Input:} Hypergraph $\mathcal{H}=(\mathcal{V}_n,\mathcal{E}_m)$, latent dimension $K$, diffusion steps $T$, number of generated hyperedges $\widetilde m$
\STATE {\bfseries // Stage 1: Learn intensity-based variational embeddings}
\STATE Optimize $q(\mathcal{Z})$ by maximizing Eq.~(\ref{eq:vi_})
\STATE Compute posterior means $\{\widehat{\alpha}_{v_i}\}_{i=1}^n$ and $\{\widehat{\theta}_{v_i k}\}_{i=1,k=1}^{n,K}$ from the variational Gamma factors
\STATE Construct $\{z_j\}_{j=1}^{m}$ from $q(\mathcal{Z})$ by Eq.~(\ref{eq:zj})
\STATE {\bfseries // Stage 2: Sample new hyperedge-side embeddings in latent space}
\STATE Train a denoising diffusion model on $\{z_j\}_{j=1}^{m}$ by minimizing Eq.~(\ref{eq:ddp})
\STATE Sample $z^{(T)}\sim\mathcal{N}(0,\mathbf{I})$ and iterate the learned reverse process to derive $\{z_j^\ast\}_{j=1}^{\widetilde m}$
\STATE Decode $\{z_j^\ast\}_{j=1}^{\widetilde m}$ using the inverse mapping implied by Eq.~(\ref{eq:zj})
\STATE Compute posterior means $\{\rho_{e_j}^\ast\}_{j=1}^{\widetilde m}$ and $\{\beta_{e_jk}^\ast\}_{j=1,k=1}^{\widetilde m,K}$
\STATE {\bfseries // Stage 3: Decode intensities and sample hyperedges}
\FOR{$j=1$ {\bfseries to} $\widetilde m$}
    \FOR{$i=1$ {\bfseries to} $n$}
        \STATE Compute $\lambda^\ast_{v_ie_j}$ by Eq.~(\ref{eq:sta})
        \STATE Sample $\widetilde b_{v_ie_j}$ using Eq.~(\ref{eq:lin}) via bernoulli sampling
    \ENDFOR
\ENDFOR
\STATE {\bfseries Output:} Generated hypergraph $\widetilde{\mathcal{H}}=(\mathcal{V}_n,\widetilde{\mathcal{E}}_{\widetilde m})$
\end{algorithmic}
\end{algorithm}

\begin{remark}
\label{rem:ide}
The multiplicative factorization in Eq.~(\ref{eq:lam}) is not unique at the factor level: for any $c>0$, the scaling $(\alpha_{v_i},\boldsymbol{\theta}_{v_i})\to(c\alpha_{v_i},\boldsymbol{\theta}_{v_i}/c)$ and $(\rho_{e_j},\boldsymbol{\beta}_{e_j})\to(\rho_{e_j}/c,c\boldsymbol{\beta}_{e_j})$ leaves $\lambda_{v_ie_j}$ unchanged. We do not require factor-level identifiability in our analysis. The incidence distribution is fully determined by the induced intensity matrix $\Lambda$, and our theoretical guarantees in Section~\ref{sec:the} are stated for $\widehat\Lambda$ rather than for the individual factors. In practice, the combination of Gamma priors, mean-field variational inference, and a fixed optimization pipeline yields factor estimates that are sufficiently stable for the downstream diffusion stage, as reflected by the consistent empirical performance across synthetic and real-world experiments.
\end{remark}

\subsection{Variational Inference for Latent Embeddings}
\label{sec:VI}
In this section, we detail the algorithmic implementation of Stage 1, which learns the latent variables $\mathcal{Z}=\{\boldsymbol{\alpha},\boldsymbol{\rho},\boldsymbol{\theta},\boldsymbol{\beta}\}$ that parameterize the intensity matrix in Eq.~(\ref{eq:lam}). Concretely, $\mathcal{Z}$ realizes the latent variable $z$ introduced in the generative chain $z\to\lambda\to x\to b$ of Section~\ref{sec:int}. We place Gamma priors on $\mathcal{Z}$: $\alpha_{v_i}\sim\Gamma(a_\alpha,b_\alpha)$, $\theta_{v_ik}\stackrel{\mathrm{i.i.d.}}{\sim}\Gamma(a_\theta,b_\theta)$, $\rho_{e_j}\sim\Gamma(a_\rho,b_\rho)$, and $\beta_{e_jk}\stackrel{\mathrm{i.i.d.}}{\sim}\Gamma(a_\beta,b_\beta)$, where $\Gamma(a,b)$ denotes the Gamma distribution with shape $a$ and rate $b$. In the theoretical analysis, all Gamma hyperparameters are treated as fixed positive constants, and their default values and sensitivity analysis are specified in the experimental section. Since the posterior $p(\mathcal{Z}\mid\mathbf{B})$ is intractable, we use variational inference to approximate it with a tractable distribution $q(\mathcal{Z})$. Equivalently, we minimize $\mathrm{d}_\mathrm{KL}(q(\mathcal{Z})\,\|\,p(\mathcal{Z}\mid\mathbf{B}))$ by maximizing the standard evidence lower bound
\begin{equation}
\mathrm{ELBO}(q):=\mathbb{E}_q[\log p(\mathbf{B},\mathcal{Z})]-\mathbb{E}_q[\log q(\mathcal{Z})].
\end{equation}
In practice, we use a mean-field variational distribution $q(\mathcal{Z})$ with Gamma factors: $q(\alpha_{v_i})=\Gamma(\hat a_{\alpha_i},\hat b_{\alpha_i})$, $q(\theta_{v_ik})=\Gamma(\hat a_{\theta_{ik}},\hat b_{\theta_{ik}})$, $q(\rho_{e_j})=\Gamma(\hat a_{\rho_j},\hat b_{\rho_j})$, and $q(\beta_{e_jk})=\Gamma(\hat a_{\beta_{jk}},\hat b_{\beta_{jk}})$. The quantities $\widehat{\alpha}_{v_i}$, $\widehat{\theta}_{v_ik}$, $\widehat{\rho}_{e_j}$, and $\widehat{\beta}_{e_jk}$ are then obtained as posterior-mean estimates under these variational Gamma factors. To simplify the computation of $\text{ELBO}(q)$, we assume the following condition.
\begin{condition}
\label{cond:1}
Conditioned on the full latent variables $\mathcal{Z}=\{\boldsymbol{\alpha},\boldsymbol{\rho}, \boldsymbol{\theta},\boldsymbol{\beta}\}$, including both node-side and hyperedge-side factors, all incidence indicators are mutually independent. Consequently,
\begin{equation}
p(\mathbf{B}\mid \mathcal{Z})
=
\prod_{i=1}^n\prod_{j=1}^{m}
p(b_{v_i e_j}\mid \mathcal{Z}).
\label{eq:con}
\end{equation}
Before conditioning on the hyperedge-side factors, incidences within the same hyperedge may be statistically dependent through their shared latent factors.
\end{condition}
Based on this condition, we then decompose the ELBO as stated in the following proposition.
\begin{proposition}
\label{prop:elb}
Under the Condition~\ref{cond:1}, the \text{ELBO} admits the decomposition
\begin{align}
\label{eq:elb}
\mathrm{ELBO}(q)
&= \underbrace{\sum_{i=1}^n\sum_{j=1}^m b_{v_ie_j}\mathbb{E}_q[\log(1-e^{-\lambda_{v_ie_j}})]}_{\text{(i)}} \notag \\
&- \underbrace{\sum_{i=1}^n\sum_{j=1}^m (1-b_{v_ie_j})\mathbb{E}_q[\lambda_{v_ie_j}]}_{\text{(ii)}}
+ \underbrace{\mathbb{E}_q[\log p(\mathcal{Z})]}_{\text{(iii)}} \notag \\
&- \underbrace{\mathbb{E}_q[\log q(\mathcal{Z})]}_{\text{(iv)}},
\end{align}
where $\text{ELBO}(q)$ is split into four terms respectively denoted by (i), (ii), (iii) and (iv).
\end{proposition}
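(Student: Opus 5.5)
We obtain the decomposition in three moves: factor the joint density, apply Condition~\ref{cond:1} to the likelihood, and substitute the Poisson-induced link Eq.~(\ref{eq:lin}) for each incidence term.

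\emph{Factoring the joint.} Write $\log p(\mathbf{B},\mathcal{Z})=\log p(\mathbf{B}\mid\mathcal{Z})+\log p(\mathcal{Z})$. Substituting into the definition of $\mathrm{ELBO}(q)$ and using linearity of $\mathbb{E}_q$ gives
\[
\mathrm{ELBO}(q)=\mathbb{E}_q[\log p(\mathbf{B}\mid\mathcal{Z})]+\mathbb{E}_q[\log p(\mathcal{Z})]-\mathbb{E}_q[\log q(\mathcal{Z})].
\]
The last two terms are exactly (iii) and (iv), so only the likelihood term needs work.

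\emph{Splitting the likelihood.} By Condition~\ref{cond:1}, Eq.~(\ref{eq:con}) holds, so
\[
\log p(\mathbf{B}\mid\mathcal{Z})=\sum_{i=1}^n\sum_{j=1}^m\log p(b_{v_ie_j}\mid\mathcal{Z}).
\]
Given $\mathcal{Z}$, the intensity $\lambda_{v_ie_j}$ is the deterministic function in Eq.~(\ref{eq:lam}). Hence $p(b_{v_ie_j}\mid\mathcal{Z})=p(b_{v_ie_j}\mid\lambda_{v_ie_j})$, which is Bernoulli with success probability $1-e^{-\lambda_{v_ie_j}}$ by Eq.~(\ref{eq:lin}). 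Since $b_{v_ie_j}\in\{0,1\}$, we can write
\[
\log p(b_{v_ie_j}\mid\mathcal{Z})=b_{v_ie_j}\log(1-e^{-\lambda_{v_ie_j}})+(1-b_{v_ie_j})\log e^{-\lambda_{v_ie_j}}.
\]
The second term simplifies to $-(1-b_{v_ie_j})\lambda_{v_ie_j}$. The $b_{v_ie_j}$ are observed constants, so taking $\mathbb{E}_q$ and using linearity yields (i) minus (ii).

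\emph{Main obstacle.} The only nontrivial point is justifying that these expectations exist and are finite, so that the interchanges are legitimate. Term (ii) is finite because, under the mean-field Gamma family, $\lambda_{v_ie_j}$ is a product of independent Gamma variables and a sum over $k$. Therefore
\[
\mathbb{E}_q[\lambda_{v_ie_j}]=\mathbb{E}_q[\alpha_{v_i}]\,\mathbb{E}_q[\rho_{e_j}]\sum_{k=1}^K\mathbb{E}_q[\theta_{v_ik}]\,\mathbb{E}_q[\beta_{e_jk}],
\]
and each factor is a finite ratio of shape to rate.

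Term (i) needs more care, because $\log(1-e^{-\lambda})\to-\infty$ as $\lambda\to0$. We would use $\log(1-e^{-\lambda})\ge\log\lambda-\lambda$ for $\lambda>0$, which follows from $1-e^{-\lambda}\ge\lambda e^{-\lambda}$. Independence under $q$ then gives
\[
\mathbb{E}_q\log\lambda_{v_ie_j}=\mathbb{E}_q\log\alpha_{v_i}+\mathbb{E}_q\log\rho_{e_j}+\mathbb{E}_q\log\Big(\sum_k\theta_{v_ik}\beta_{e_jk}\Big).
\]
The first two terms are finite digamma expressions. The third is bounded below by $\mathbb{E}_q\log(\theta_{v_i1}\beta_{e_j1})$, which is also finite. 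The upper bound $\log(1-e^{-\lambda})\le 0$ is trivial. So (i) is finite, which completes the decomposition. The paper presumably then bounds (i) further to obtain a tractable surrogate, but that is not needed for the proposition itself.
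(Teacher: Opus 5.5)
Your proposal is correct and takes the same route as the paper's argument. You split $\log p(\mathbf{B},\mathcal{Z})$ into likelihood plus prior, factorize the likelihood via Condition~\ref{cond:1}, and substitute the Poisson-induced Bernoulli log-likelihood $b_{v_ie_j}\log(1-e^{-\lambda_{v_ie_j}})-(1-b_{v_ie_j})\lambda_{v_ie_j}$. Your finiteness check for terms (i) and (ii) is a valid addition that the decomposition itself does not require.
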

\begin{proof}
All proofs of this paper can be found in Supplementary Material~M.
\end{proof}
\begin{proposition}
\label{prop:ii-}
Under the mean-field Gamma variational family $q(\mathcal{Z})$, the ELBO components (ii)--(iv) are analytically tractable, i.e., $\E_q[\lambda_{v_ie_j}]$, $\E_q[\log p(\mathcal{Z})]$, and $\E_q[\log q(\mathcal{Z})]$ are all computable in closed form.
\end{proposition}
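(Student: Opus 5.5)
The argument splits into three computations, one for each term, all resting on mean-field independence and standard Gamma identities. For $X\sim\Gamma(a,b)$ (shape $a$, rate $b$) we have $\E[X]=a/b$ and $\E[\log X]=\psi(a)-\log b$, where $\psi$ is the digamma function. The entropy is $\E[-\log X]$-type closed form $a-\log b+\log\Gamma(a)+(1-a)\psi(a)$.

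\textbf{Term (ii).} Expand $\lambda_{v_ie_j}=\sum_{k=1}^K\alpha_{v_i}\rho_{e_j}\theta_{v_ik}\beta_{e_jk}$ by Eq.~(\ref{eq:lam}). Under the mean-field family, $\alpha_{v_i},\rho_{e_j},\theta_{v_ik},\beta_{e_jk}$ are mutually independent, and they are distinct variables for every fixed $(i,j,k)$. Each summand's expectation therefore factorizes, giving
\[
\E_q[\lambda_{v_ie_j}]=\frac{\hat a_{\alpha_i}}{\hat b_{\alpha_i}}\frac{\hat a_{\rho_j}}{\hat b_{\rho_j}}\sum_{k=1}^K\frac{\hat a_{\theta_{ik}}}{\hat b_{\theta_{ik}}}\frac{\hat a_{\beta_{jk}}}{\hat b_{\beta_{jk}}}.
\]
Summing against the known constants $1-b_{v_ie_j}$ yields (ii) in closed form.

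\textbf{Term (iii).} The prior factorizes into independent Gamma densities. The log-prior is a sum of terms of the form $a\log b-\log\Gamma(a)+(a-1)\log z-bz$, with $a,b$ the fixed hyperparameters. Each such term is linear in $\log z$ and $z$. Its $q$-expectation is therefore
\[
a\log b-\log\Gamma(a)+(a-1)\bigl(\psi(\hat a)-\log\hat b\bigr)-b\,\hat a/\hat b,
\]
where $(\hat a,\hat b)$ are the variational parameters of that coordinate. We sum this over all $n+m+nK+mK$ coordinates.

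\textbf{Term (iv).} Since $q$ is a product, $\E_q[\log q(\mathcal Z)]$ is the sum over coordinates of $\E_{q}[\log q(z)]$, i.e.\ minus the Gamma entropies. Substituting $a=\hat a$ and $b=\hat b$ into the previous display gives
\[
\E_q[\log q(z)]=\hat a\log\hat b-\log\Gamma(\hat a)+(\hat a-1)(\psi(\hat a)-\log\hat b)-\hat a.
\]
This is again closed form.

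The only point needing care is in (ii). I must check that no variable appears twice in a product, which would require higher moments. Each summand is a product of four distinct independent coordinates, so this does not happen. Even if it did, Gamma moments $\E[X^r]=\Gamma(a+r)/(\Gamma(a)b^r)$ are closed form anyway.

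By contrast, term (i) involves $\E_q[\log(1-e^{-\lambda})]$, a nonlinear function of a sum of products. It is not covered by the statement, which is consistent with the proposition listing only (ii)--(iv).
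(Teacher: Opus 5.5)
Your proposal is correct and takes essentially the same route as the paper: mean-field independence factorizes $\E_q[\lambda_{v_ie_j}]$ into products of Gamma means $\hat a/\hat b$, and because $\log p$ and $\log q$ are coordinatewise linear in $z$ and $\log z$, terms (iii) and (iv) reduce to $\hat a/\hat b$ and $\psi(\hat a)-\log\hat b$, as you computed. One small wording fix: the Gamma entropy is $-\E[\log q(X)]$, not an ``$\E[-\log X]$-type'' quantity, although your displayed entropy and $\E_q[\log q(z)]$ formulas are correct.
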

However, since term (i) cannot be directly computed, we propose a technique to estimate it. We let $T_{v_ie_j}:=\E_q\!\left[\log(1-e^{-\lambda_{v_ie_j}})\right]$, and introduce an auxiliary distribution $r(x_{v_ie_j})$ for constructing a tight lower bound. We then derive an estimate of $T_{v_ie_j}$ by taking the supremum with respect to $r$. Following this observation, we give a lower bound for $T_{v_ie_j}$ in the following theorem.
\begin{theorem}
\label{the}
Consider a distribution \( r_{v_ie_j}(x) \) supported on \( \{1,2,\ldots\} \), satisfying \( r_{v_ie_j}(x) \geq 0 \) and \( \sum_{x \geq 1} r_{v_ie_j}(x) = 1 \). The lower bound for \( T_{v_ie_j} \) is given by:
\begin{align}
\label{eq:thm}
T_{v_ie_j} &\geq -\E_q[\lambda_{v_ie_j}] +\log\left(\exp\left(\exp(S_{v_ie_j})\right) - 1\right),
\end{align}
where $S_{v_ie_j} = \E_q\left[\log \lambda_{v_ie_j}\right]$.
\end{theorem}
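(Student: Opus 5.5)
We write $1-e^{-\lambda}$ as a Poisson probability over the positive integers, namely
\[
1-e^{-\lambda}=\sum_{x\ge1} e^{-\lambda}\frac{\lambda^x}{x!}.
\]
This gives
\[
\log(1-e^{-\lambda_{v_ie_j}})=-\lambda_{v_ie_j}+\log\sum_{x\ge 1}\frac{\lambda_{v_ie_j}^x}{x!}.
\]
Taking $\E_q$, the first piece contributes exactly $-\E_q[\lambda_{v_ie_j}]$. The task therefore reduces to lower-bounding $\E_q\big[\log\sum_{x\ge1}\lambda^x/x!\big]$, where we abbreviate $\lambda=\lambda_{v_ie_j}$.

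For any $r=r_{v_ie_j}$ supported on $\{1,2,\ldots\}$, we multiply and divide by $r(x)$ and apply Jensen's inequality to the concave logarithm. This holds pointwise in $\lambda$:
\[
\log\sum_{x\ge1}\frac{\lambda^x}{x!}\ \ge\ \sum_{x\ge1} r(x)\big[x\log\lambda-\log x!-\log r(x)\big].
\]
Taking $\E_q$ and using linearity gives
\[
\E_q[\cdots]\ \ge\ \sum_{x\ge1} r(x)\big[xS-\log x!-\log r(x)\big],\qquad S=S_{v_ie_j}=\E_q[\log\lambda].
\]
This is valid for every admissible $r$, provided the series converge. We may assume they do, for instance by restricting to $r$ with finite mean and finite entropy; otherwise the bound is trivial.

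Next we take the supremum over $r$. The right-hand side has the form $\sum_x r(x)\,w(x)+H(r)$ with weights $w(x)=xS-\log x!$. By the Gibbs variational principle (or equivalently a Lagrange multiplier for $\sum_x r(x)=1$), the maximizer is
\[
r^\ast(x)\propto e^{w(x)}=\frac{(e^{S})^x}{x!},
\]
which is a zero-truncated Poisson distribution with mean parameter $e^{S}$. The maximal value is the log-partition function
\[
\log\sum_{x\ge1}\frac{(e^S)^x}{x!}=\log\big(\exp(e^{S})-1\big).
\]
Adding back $-\E_q[\lambda]$ yields inequality (\ref{eq:thm}). One can also skip the optimization and simply plug in $r^\ast$: the bracket then collapses to the constant $\log(\exp(e^S)-1)$ for every $x$, so no convergence issue arises.

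The main subtlety is the legitimacy of exchanging $\E_q$ with the infinite sum and confirming that $S$ is finite. Under Gamma factors, $\E_q[\log\lambda]$ is a finite sum of digamma-minus-log terms. The factor $\log\boldsymbol{\theta}^\top\boldsymbol{\beta}$ is integrable because $\log(\theta_k\beta_k)\le\log\boldsymbol\theta^\top\boldsymbol\beta\le \log K+\max_k\log(\theta_k\beta_k)$. With $r^\ast$ chosen, the summand is deterministic up to the term $x(\log\lambda-S)$. Its expectation is zero, and Tonelli/Fubini applies since $\sum_x r^\ast(x)\,x<\infty$ and $\E_q|\log\lambda|<\infty$. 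This justifies the exchange and completes the argument.
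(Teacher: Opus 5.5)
Your argument is correct and is essentially the paper's route: write $1-e^{-\lambda_{v_ie_j}}$ as the Poisson mass on $\{1,2,\ldots\}$, apply Jensen with the auxiliary $r_{v_ie_j}$, and maximize over $r_{v_ie_j}$. The maximizer is the zero-truncated Poisson with parameter $e^{S_{v_ie_j}}$, whose log-normalizer is $\log(\exp(e^{S_{v_ie_j}})-1)$. The one slip is the remark that $\E_q[\log\lambda_{v_ie_j}]$ is a finite sum of digamma-minus-log terms, which fails for the $\log\boldsymbol{\theta}_{v_i}^{\top}\boldsymbol{\beta}_{e_j}$ part (that is why Theorem~\ref{the2} is needed), but your sandwich bound already gives the finiteness you actually use.
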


However, $S_{v_ie_j}$ cannot be directly computed either. We propose the next theorem for the estimation of $S_{v_ie_j}$.
\begin{theorem}
\label{the2}
Let $\phi_{v_ie_j}\in\Delta^{K-1}$ denote auxiliary weights satisfying $\phi_{v_ie_jk}\ge 0$ and $\sum_{k=1}^{K}\phi_{v_ie_jk}=1$. The lower bound of $S_{v_ie_j}$ is given as:
\begin{align}
S_{v_ie_j}
&\ge
\mathbb{E}_{q}[\log\alpha_{v_i}]
+
\mathbb{E}_{q}[\log\rho_{e_j}]
\notag\\
&\quad+\log\sum_{k=1}^{K}\exp\!\Big(\mathbb{E}_{q}[\log\theta_{v_ik}]+\mathbb{E}_{q}[\log\beta_{e_jk}]\Big).
\label{eq:bd}
\end{align}
\end{theorem}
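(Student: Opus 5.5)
The plan is to split $\log\lambda_{v_ie_j}$ into its multiplicative factors using Eq.~(\ref{eq:lam}), and then bound the one non-separable piece, the log of the inner product, with a Jensen step over the auxiliary weights $\phi_{v_ie_j}$. Since
\[
\log\lambda_{v_ie_j}=\log\alpha_{v_i}+\log\rho_{e_j}+\log\sum_{k=1}^K\theta_{v_ik}\beta_{e_jk},
\]
linearity of expectation gives
\[
S_{v_ie_j}=\E_q[\log\alpha_{v_i}]+\E_q[\log\rho_{e_j}]+\E_q\Big[\log\sum_{k=1}^K\theta_{v_ik}\beta_{e_jk}\Big].
\]
The first two terms are digamma expressions under the Gamma factors, so everything reduces to lower bounding the last expectation.

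For any $\phi_{v_ie_j}\in\Delta^{K-1}$ with positive entries, I would rewrite the sum as $\sum_k\phi_{v_ie_jk}\,\theta_{v_ik}\beta_{e_jk}/\phi_{v_ie_jk}$. Concavity of the logarithm (Jensen's inequality over the discrete distribution $\phi$) then gives, pointwise in $\mathcal Z$,
\[
\log\sum_k\theta_{v_ik}\beta_{e_jk}\ \ge\ \sum_k\phi_{v_ie_jk}\big(\log\theta_{v_ik}+\log\beta_{e_jk}-\log\phi_{v_ie_jk}\big).
\]
Zero weights are handled by restricting the sum to the support of $\phi$, which only drops nonnegative terms. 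Taking $\E_q$ and using mean-field factorization, only the marginal expectations $\E_q[\log\theta_{v_ik}]$ and $\E_q[\log\beta_{e_jk}]$ appear. Writing $c_k:=\E_q[\log\theta_{v_ik}]+\E_q[\log\beta_{e_jk}]$, this yields $S_{v_ie_j}\ge \E_q[\log\alpha_{v_i}]+\E_q[\log\rho_{e_j}]+\sum_k\phi_{v_ie_jk}(c_k-\log\phi_{v_ie_jk})$ for every admissible $\phi$.

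Finally, I would maximize the right-hand side over the simplex. The objective $\sum_k\phi_k c_k+H(\phi)$ is the Gibbs variational problem. Lagrange multipliers, or equivalently the nonnegativity of $\mathrm{d}_\mathrm{KL}(\phi\,\|\,\mathrm{softmax}(c))$, show that the maximizer is $\phi_{v_ie_jk}^\star=\exp(c_k)/\sum_l\exp(c_l)$, with optimal value $\log\sum_k\exp(c_k)$. Substituting this value gives exactly Eq.~(\ref{eq:bd}), and also identifies the auxiliary weights used in the updates.

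The main obstacle is mild: justifying that the expectations are finite, so that interchanging expectation and inequality is legitimate. Under Gamma factors, $\E_q[\log\theta]=\psi(\hat a)-\log\hat b$ is finite, which settles it. One should also state clearly that the bound holds for each $\phi$ and is tightest at the softmax choice.
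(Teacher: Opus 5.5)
Your proposal is correct and follows essentially the same route as the paper's (standard, Poisson-factorization style) argument. You factor $\log\lambda_{v_ie_j}$, apply Jensen's inequality with the auxiliary weights $\phi_{v_ie_j}$ to the log of the inner product, and optimize over the simplex, where the softmax choice $\phi_{v_ie_jk}\propto\exp\bigl(\E_q[\log\theta_{v_ik}]+\E_q[\log\beta_{e_jk}]\bigr)$ gives the log-sum-exp bound. One small correction: the step that takes expectations of the pointwise bound needs only linearity of expectation, not the mean-field factorization.
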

Combining all the results, in practice, we maximize the following tractable objective:
\begin{align}
\label{eq:vi_}
\max_{q_{\phi}\in\mathcal{Q}}
\widetilde{\mathrm{ELBO}}\!\left(q_{\phi}\right),
\end{align}
where $\mathcal{Q}=\{q_{\phi}(\mathcal{Z})\}$ denotes the mean-field Gamma variational family. In particular, $\widetilde{\mathrm{ELBO}}(q_\phi)$ follows the decomposition in Eq.~(\ref{eq:elb}), where term (i) is approximated by the lower bounds proposed in Theorem~\ref{the}-Theorem~\ref{the2}, while terms (ii)--(iv) are computed in closed form according to Proposition~\ref{prop:ii-}.

\subsection{Diffusion-based Denoising of Hyperedge Embeddings}
\label{sec:dif}

Stage~2 applies a standard DDPM to the hyperedge embeddings $\{z_j\}_{j=1}^m\subset\mathbb R^{2K+2}$ from Eq.~(\ref{eq:zj})~\cite{sohl2015deep,ho2020denoising,nichol2021improved}. For a variance schedule $\{\kappa^{(t)}\}_{t=1}^T$, let $\eta^{(t)}=1-\kappa^{(t)}$ and $\bar\eta^{(t)}=\prod_{s=1}^t\eta^{(s)}$. With $\varepsilon\sim\mathcal N(0,\mathbf I)$, the forward transition and marginal are
\begin{align}
q\!\big(z^{(t)} \mid z^{(t-1)}\big)
&=\mathcal N\!\Big(\sqrt{\eta^{(t)}}\,z^{(t-1)},\kappa^{(t)}\mathbf I\Big),
\label{eq:for}\\
z^{(t)}
&=\sqrt{\bar\eta^{(t)}}\,z^{(0)}
 +\sqrt{1-\bar\eta^{(t)}}\,\varepsilon.
\label{eq:for2}
\end{align}
The reverse model draws $t\sim\mathrm{Unif}\{1,\ldots,T\}$ and $\varepsilon\sim\mathcal N(0,\mathbf I)$, and trains the standard noise predictor by minimizing
\begin{equation}
\mathcal{L}_{\mathrm{Dif}}
=\mathbb{E}_{t,\varepsilon}
\!\left[\left\|\varepsilon-\varepsilon_\phi(z^{(t)},t)\right\|_2^2\right].
\label{eq:ddp}
\end{equation}
Generation starts from $z^{(T)}\sim\mathcal N(0,\mathbf I)$ and iterates the learned reverse transitions to obtain $\{z_j^\ast\}_{j=1}^{\widetilde m}$, which are decoded through Eqs.~(\ref{eq:zj}), (\ref{eq:sta}), and (\ref{eq:lin}).

\input{HYVINT_AAAI_Original_Theory}

\section{Experiments}
\label{sec:num}

\paragraph{Data, protocol, and reproducibility.} We evaluate held out generation on \textit{email-Enron}, \textit{contact-primary-school}, and \textit{NDC-substances}. Table~\ref{tab:dat} shows that the benchmarks range from 148 to 5311 nodes and from 10885 to 112405 events. The two timestamped event datasets use chronological $70\%/10\%/20\%$ training, validation, and test splits. \textit{NDC-substances} uses a fixed random split with the same proportions. HYVINT is transductive. We retain only test hyperedges whose nodes occur in training and make no claim about new node generation. Validation data alone select $K$, the learning rate, diffusion steps, sampling temperature, and method specific settings. Each model produces the test reference count after removing hyperedges with fewer than two nodes. The event evaluation permits types seen in training. The unseen evaluation retains only test types absent from training. Real data results use 10 paired split, training, and generation seeds. Synthetic summaries use 20 seeds. The simulation defaults are 1000 epochs, batch size 64, learning rate $10^{-3}$, and weight decay $10^{-4}$. The denoiser has four hidden layers of width 512 with dropout $0.1$. It uses 200 diffusion steps with endpoints $10^{-4}$ and $0.02$. Runs use one 32 GiB RTX 5090, a 208 thread Xeon 8470Q, and 660 GiB RAM. Synthetic studies span $K\in\{2,4,8\}$ and $n,m\in\{200,400,800\}$. Seed records include split and configuration identifiers, sample counts, runtime, and memory. Supplementary Material~E gives the complete selection protocol and Supplementary Material~E.5 gives the split audit. HYVINT uses $2(K+1)(n+m)$ scalar variational parameters while DDE uses $K(n+m)+n$ point parameters under the same counting convention. Both are $O((n+m)K)$. The extra HYVINT scalars retain a shape and rate for each Gamma factor rather than a single point. At $K=4$ this gives the counts in Table~\ref{tab:dat}.
\begin{table}[t]
\centering
{\small
\setlength{\tabcolsep}{1mm}
\begin{tabular}{lrrrr}
\toprule
Data & $n$ & $m$ & HYVINT & DDE \\
\midrule
\textit{Enron}   & 148  & 10{,}885  & 110{,}330   & 44{,}280 \\
\textit{Contact} & 242  & 106{,}879 & 1{,}071{,}210 & 428{,}726 \\
\textit{NDC}     & 5{,}311 & 112{,}405 & 1{,}177{,}160 & 476{,}175 \\
\bottomrule
\end{tabular}
}
\caption{\textbf{Dataset scale and graph specific scalar parameters at $K=4$.}  Neural parameters are excluded.}
\label{tab:dat}
\end{table}

\paragraph{Comparisons and metrics.} The main comparison includes training set bootstrap, the bipartite configuration model, DDE, and HYGENE after independent validation tuning. Event degree and unseen degree are Wasserstein errors in node degree distributions. Event cooccurrence is covariance error in pairwise participation. UHR is the fraction of generated hyperedges that are unique. NHR is the fraction of unique generated hyperedges absent from training. Supplementary Material~E.7 and Supplementary Material~E.6 give the full definitions and tests.

\begin{table*}[t]
\centering
{\small
\setlength{\tabcolsep}{1mm}
\begin{tabular}{llccccc}
\toprule
Data & Method & Event deg. $\downarrow$ & Event cooc. $\downarrow$
& Unseen deg. $\downarrow$ & UHR $\uparrow$ & NHR $\uparrow$ \\
\midrule
\multirow{5}{*}{\textit{Enron}}
& Train bootstrap & $158.70 \pm 9.20$ & $0.0036 \pm 0.0005$ & $285.30 \pm 15.60$ & $0.212 \pm 0.025$ & $0.305 \pm 0.028$ \\
& Configuration & $72.40 \pm 6.10$ & $0.0019 \pm 0.0003$ & $128.50 \pm 9.80$ & $0.385 \pm 0.031$ & $0.476 \pm 0.032$ \\
& DDE & $48.82 \pm 7.52$ & $0.0012 \pm 0.0002$ & $86.30 \pm 11.20$ & $0.524 \pm 0.028$ & $0.601 \pm 0.026$ \\
& HYGENE & $27.67 \pm 3.06$ & $0.0013 \pm 0.0001$ & $49.20 \pm 5.70$ & $0.618 \pm 0.033$ & $0.703 \pm 0.030$ \\
\rowcolor{bestgray}
& HYVINT & $\mathbf{8.43 \pm 1.67}$ & $\mathbf{0.0010 \pm 0.0001}$ & $\mathbf{15.70 \pm 2.80}$ & $\mathbf{0.762 \pm 0.027}$ & $\mathbf{0.834 \pm 0.022}$ \\
\midrule
\multirow{5}{*}{\textit{Contact}}
& Train bootstrap & $1850.23 \pm 45.00$ & $0.2450 \pm 0.0120$ & $3200.00 \pm 78.00$ & $0.156 \pm 0.021$ & $0.224 \pm 0.023$ \\
& Configuration & $750.00 \pm 32.00$ & $0.2010 \pm 0.0080$ & $1260.00 \pm 52.00$ & $0.298 \pm 0.027$ & $0.381 \pm 0.029$ \\
& DDE & $684.90 \pm 28.50$ & $0.1932 \pm 0.0070$ & $1120.00 \pm 45.00$ & $0.412 \pm 0.025$ & $0.503 \pm 0.026$ \\
& HYGENE & $287.41 \pm 22.10$ & $0.1906 \pm 0.0060$ & $510.00 \pm 38.00$ & $0.537 \pm 0.030$ & $0.625 \pm 0.028$ \\
\rowcolor{bestgray}
& HYVINT & $\mathbf{15.58 \pm 3.20}$ & $\mathbf{0.1866 \pm 0.0050}$ & $\mathbf{28.30 \pm 4.60}$ & $\mathbf{0.695 \pm 0.024}$ & $\mathbf{0.778 \pm 0.021}$ \\
\midrule
\multirow{5}{*}{\textit{NDC}}
& Train bootstrap & $520.35 \pm 25.00$ & $0.1780 \pm 0.0110$ & $910.00 \pm 38.00$ & $0.189 \pm 0.024$ & $0.271 \pm 0.026$ \\
& Configuration & $55.21 \pm 6.30$ & $0.0950 \pm 0.0070$ & $92.50 \pm 9.10$ & $0.356 \pm 0.030$ & $0.442 \pm 0.031$ \\
& DDE & $38.80 \pm 5.70$ & $0.0843 \pm 0.0050$ & $65.70 \pm 7.80$ & $0.478 \pm 0.027$ & $0.565 \pm 0.028$ \\
& HYGENE & $17.61 \pm 3.20$ & $0.0823 \pm 0.0040$ & $31.20 \pm 4.50$ & $0.592 \pm 0.031$ & $0.679 \pm 0.029$ \\
\rowcolor{bestgray}
& HYVINT & $\mathbf{3.28 \pm 0.90}$ & $\mathbf{0.0792 \pm 0.0030}$ & $\mathbf{6.10 \pm 1.40}$ & $\mathbf{0.741 \pm 0.025}$ & $\mathbf{0.812 \pm 0.023}$ \\
\bottomrule
\end{tabular}
}
\caption{\textbf{Held out results over 10 paired seeds with $95\%$ confidence intervals.}}
\label{tab:hel}
\end{table*}
\begin{table*}[t]
\centering
{\small
\setlength{\tabcolsep}{1mm}
\begin{tabular}{lcccccc}
\toprule
Decoder & Rank $\downarrow$ & UHR $\uparrow$ & NHR $\uparrow$
& Train repeat $\downarrow$ & Coverage $\uparrow$ & Valid $\uparrow$ \\
\midrule
Mean & $\mathbf{1.2 \pm 0.3}$ & $0.524 \pm 0.027$ & $0.762 \pm 0.024$ & $0.312 \pm 0.035$ & $0.418 \pm 0.032$ & $\mathbf{0.987 \pm 0.006}$ \\
Gamma sample & $1.8 \pm 0.4$ & $0.602 \pm 0.028$ & $0.836 \pm 0.023$ & $0.187 \pm 0.028$ & $0.503 \pm 0.035$ & $0.962 \pm 0.009$ \\
Gamma and temperature & $2.5 \pm 0.5$ & $\mathbf{0.678 \pm 0.031}$ & $\mathbf{0.875 \pm 0.021}$ & $\mathbf{0.094 \pm 0.019}$ & $0.567 \pm 0.038$ & $0.935 \pm 0.012$ \\
Gamma and cardinality & $2.1 \pm 0.4$ & $0.635 \pm 0.029$ & $0.852 \pm 0.022$ & $0.142 \pm 0.024$ & $\mathbf{0.612 \pm 0.036}$ & $0.971 \pm 0.008$ \\
\bottomrule
\end{tabular}
}
\caption{\textbf{Controlled decoder results on held out \textit{email-Enron}}.}
\label{tab:dec}
\end{table*}
\begin{table}[t]
\centering
{\small
\setlength{\tabcolsep}{1mm}
\begin{tabular}{lrrrr}
\toprule
Data & Size base & Size ours & Spec. base & Spec. ours \\
\midrule
\textit{Enron}   & \textbf{0.5518} & 0.8099 & 0.1353 & \textbf{0.1338} \\
\textit{Contact} & \textbf{0.0958} & 1.0169 & 0.1906 & \textbf{0.1866} \\
\textit{NDC}     & 0.8491 & \textbf{0.6178} & \textbf{0.0800} & 0.0811 \\
\bottomrule
\end{tabular}
}
\caption{\textbf{Fixed capacity size and spectrum errors at $K=4$.}  Base is the strongest competing method.}
\label{tab:str}
\end{table}
\paragraph{Held out generation.} Table~\ref{tab:hel} gives HYVINT the best point estimate in all 15 dataset and metric combinations. Against the strongest competitor, event degree error falls by $69.5\%$, $94.6\%$, and $81.4\%$ on \textit{Enron}, \textit{Contact}, and \textit{NDC}. Unseen degree error falls by $68.1\%$, $94.5\%$, and $80.4\%$. None of the six degree intervals overlap. The gain is therefore present at all three scales and is largest for the 106879 event \textit{Contact} data. The cooccurrence gains are smaller at $16.7\%$, $2.1\%$, and $3.8\%$. Their intervals overlap with the strongest competitor, so we treat them as modest point improvements and leave paired significance to Supplementary Material~E.7. Relative to HYGENE, UHR rises from $61.8\%$ to $76.2\%$, from $53.7\%$ to $69.5\%$, and from $59.2\%$ to $74.1\%$. NHR rises from $70.3\%$ to $83.4\%$, from $62.5\%$ to $77.8\%$, and from $67.9\%$ to $81.2\%$. Lower degree error therefore does not come from repeating more training events. Table~\ref{tab:str} prevents the degree results from hiding a structural tradeoff. NMF Diff is the strongest size baseline on all three data sets. HYVINT size error is $46.8\%$ higher on \textit{Enron} and $961.5\%$ higher on \textit{Contact}, but $27.2\%$ lower on \textit{NDC}. For spectrum error, HYVINT improves on HYGENE by $1.1\%$ on \textit{Enron} and $2.1\%$ on \textit{Contact}. It is $1.4\%$ above Gau Diff on \textit{NDC}. HYVINT therefore dominates degree fidelity but not every marginal of the hyperedge size distribution. Full methods and intervals appear in Supplementary Material~E.7.
\begin{table*}[t]
\centering
{\small
\setlength{\tabcolsep}{1mm}
\begin{tabular}{lccccc}
\toprule
Mechanism & Poisson & Sigmoid & DC block & Cardinality & Dependent \\
\midrule
Poisson link & $\mathbf{0.0021}/\mathbf{0.0008}$ & $0.0035/0.0011$ & $0.0042/0.0013$ & $0.0038/0.0012$ & $0.0047/0.0015$ \\
Sigmoid link & $0.0030/0.0010$ & $\mathbf{0.0028}/\mathbf{0.0009}$ & $0.0045/0.0014$ & $0.0041/0.0013$ & $0.0051/0.0016$ \\
Cardinality sample & $0.0027/0.0009$ & $0.0032/0.0010$ & $0.0039/0.0012$ & $\mathbf{0.0030}/\mathbf{0.0010}$ & $0.0044/0.0014$ \\
Bernoulli decoder & $0.0041/0.0013$ & $0.0048/0.0015$ & $0.0056/0.0017$ & $0.0052/0.0016$ & $0.0063/0.0019$ \\
\bottomrule
\end{tabular}
}
\caption{\textbf{Mechanism audit using mean RMSE over covariance RMSE.}}
\label{tab:mec}
\end{table*}

\paragraph{Controlled decoding.} Table~\ref{tab:dec} holds the fitted variational model and latent capacity fixed. The posterior mean has the best fidelity rank at $1.2$ and $98.7\%$ validity, but UHR is $52.4\%$ and training repeat is $31.2\%$. Temperature sampling raises UHR to $67.8\%$ and NHR to $87.5\%$. It lowers repeat to $9.4\%$ and raises coverage from $41.8\%$ to $56.7\%$, while rank changes to $2.5$ and validity falls to $93.5\%$. Cardinality conditioning reaches the highest coverage at $61.2\%$ with $97.1\%$ validity and rank $2.1$. The values expose a fidelity and novelty tradeoff rather than a single decoder that wins every criterion. Supplementary Material~E.8 gives each error, the temperature sweep, novelty by size, entropy, and effective sample size.
\paragraph{Mechanism audit.} Table~\ref{tab:mec} changes only the incidence mechanism on independent synthetic test samples. The matched Poisson, sigmoid, and cardinality variants attain $0.0021/0.0008$, $0.0028/0.0009$, and $0.0030/0.0010$ on their corresponding data. Cardinality sampling is also best on DC block and dependent data at $0.0039/0.0012$ and $0.0044/0.0014$. The Poisson link adds only $0.0003$ mean error and $0.0001$ covariance error in both cases. Bernoulli decoding has the largest pair in all five columns. Structured incidence modeling is consistently useful, but the matched reversals show that the Poisson link is not universally best. Full intervals appear in Supplementary Material~E.9.

\begin{table}[t]
\centering
{\small
\setlength{\tabcolsep}{1mm}
\begin{tabular}{lc>{\columncolor{bestgray}}c}
\toprule
Metric & Point and DDPM & VI and DDPM \\
\midrule
Mean RMSE & $0.0057 \pm 0.0020$ & $\mathbf{0.0028 \pm 0.0003}$ \\
Cov. RMSE & $0.0021 \pm 0.0002$ & $\mathbf{0.0017 \pm 0.0002}$ \\
Size error & $1.0217 \pm 0.1203$ & $\mathbf{0.8579 \pm 0.0869}$ \\
Spec. error & $0.1503 \pm 0.0031$ & $\mathbf{0.1434 \pm 0.0168}$ \\
\bottomrule
\end{tabular}
}
\caption{\textbf{Gamma representation ablation on \textit{Enron} with $K=2$.}}
\label{tab:foc}
\end{table}
\begin{table}[t]
\centering
{\small
\setlength{\tabcolsep}{1mm}
\begin{tabular}{llrrr}
\toprule
Data & Sampler & $K=2$ & $K=4$ & $K=8$ \\
\midrule
\multirow{2}{*}{\textit{Enron}}
& DDPM & \textbf{0.0028} & 0.0025 & \textbf{0.0037} \\
& GMM  & 0.0031 & \textbf{0.0023} & 0.0040 \\
\multirow{2}{*}{\textit{Contact}}
& DDPM & \textbf{0.0004} & 0.0007 & \textbf{0.0008} \\
& GMM  & 0.0005 & \textbf{0.0006} & 0.0009 \\
\bottomrule
\end{tabular}
}
\caption{\textbf{Mean RMSE by latent sampler.}}
\label{tab:sam}
\end{table}
\paragraph{Core component ablations.} Table~\ref{tab:foc} isolates the Gamma variational representation from point estimation while keeping DDPM and $K=2$ fixed. The variational representation reduces RMSE mean from $0.0057$ to $0.0028$, which is a $50.9\%$ reduction. Covariance RMSE falls by $19.0\%$, size error by $16.0\%$, and spectrum error by $4.6\%$. The gain across four criteria supports retaining Gamma uncertainty. Table~\ref{tab:sam} then replaces DDPM with GMM while holding the variational stage fixed. DDPM is lower in four of six settings by $7.5\%$ to $20.0\%$. GMM is lower at $K=4$ on \textit{Enron} and \textit{Contact} by $8.0\%$ and $14.3\%$. Every interval overlaps, so the main gains do not depend on DDPM at these dimensions. Supplementary Material~L reports the remaining metrics.

\begin{table}[t]
\centering
{\small
\setlength{\tabcolsep}{1mm}
\begin{tabular}{crrrr}
\toprule
$K$ & Size & Spec. & Mean & DDE mean \\
\midrule
2  & 0.8579 & 0.1434 & 0.0028 & \textbf{0.0044} \\
4  & 0.8099 & 0.1338 & \textbf{0.0025} & 0.0052 \\
8  & 0.4950 & 0.1245 & 0.0037 & 0.0061 \\
12 & 0.4150 & 0.1205 & 0.0044 & 0.0074 \\
16 & \textbf{0.3520} & \textbf{0.1170} & 0.0050 & 0.0087 \\
\bottomrule
\end{tabular}
}
\caption{\textbf{Capacity sensitivity on \textit{Enron}.}  Mean denotes mean RMSE for HYVINT.}
\label{tab:cap}
\end{table}
\begin{table}[t]
\centering
{\small
\setlength{\tabcolsep}{1mm}
\begin{tabular}{lrr}
\toprule
Metric & Toy & Real \\
\midrule
Incidence mean & 0.2110 & 0.0477 \\
Nonincidence mean & 0.1157 & 0.0185 \\
Intensity ratio & 1.82 & 2.58 \\
$\operatorname{Sp}(\alpha_v,\mathrm{degree})$ & 0.9857 & 0.9758 \\
$\operatorname{Sp}(\rho_e,\mathrm{size})$ & $-0.0429$ & 0.0020 \\
\bottomrule
\end{tabular}
}
\caption{\textbf{Learned intensity diagnostics.}  Real denotes \textit{Enron}.}
\label{tab:int}
\end{table}
\paragraph{Complexity, capacity, and interpretation.} Sparse aggregation reduces the exact Stage 1 likelihood from $O(nmK)$ to $O((n+m+|\Omega|)K)$ time and $O((n+m)K)$ factor memory, where $\Omega$ is the observed incidence set. Generation costs $O(n\widetilde mK)$ before candidate reduction. This explains why the graph specific parameter count grows linearly in Table~\ref{tab:dat}, while decoding remains the scaling bottleneck. Measured stagewise runtime and peak memory curves are in Supplementary Material~E.11. The scalar count is $2.49$, $2.50$, and $2.47$ times the DDE count on \textit{Enron}, \textit{Contact}, and \textit{NDC}. On \textit{NDC} this is 1177160 variational scalars rather than 476175 point parameters. The factor memory remains linear, but quantified uncertainty has a visible constant cost. Table~\ref{tab:cap} shows that increasing $K$ from 2 to 16 lowers size error by $59.0\%$ and spectrum error by $18.4\%$. HYVINT mean RMSE instead reaches $0.0025$ at $K=4$ and rises by $100.0\%$ to $0.0050$ at $K=16$. DDE mean RMSE rises by $97.7\%$ from $K=2$ to $K=16$. More capacity therefore improves selected structural criteria without improving overall fidelity. Table~\ref{tab:int} shows intensity ratios of $1.82$ on synthetic data and $2.58$ on real \textit{Enron} data. The activity and degree correlations are $0.9857$ and $0.9758$, while the hyperedge activity and size correlations are $-0.0429$ and $0.0020$. Intensity separates observed incidences, but individual factors do not have a unique empirical interpretation. Supplementary Material~J and Supplementary Material~K give the complete diagnostics. All remaining intervals, tests, ablations, diagnostics, and efficiency results are in the Supplementary Material.

\section{Conclusion}
\label{sec:con}
In this paper, we propose HYVINT, an intensity-driven framework for hypergraph generation. The central idea is to model latent interaction intensities that induce binary node-hyperedge memberships through a Poisson link. This gives an explicit intensity-driven incidence formation mechanism, while the proposed lower-bound variational estimator makes the mechanism practical for learning node activity, hyperedge activity, and latent compatibility. On the theoretical side, our analysis separates the generation error into node-side estimation, latent factor distribution, and diffusion approximation components, and establishes a new bound for the first component. Through extensive evaluations, we demonstrate that HYVINT achieves strong fidelity while maintaining substantial novelty and diversity on synthetic and real-world hypergraphs.
\newpage
\bibliography{references}

\end{document}

%% file: HYVINT_AAAI_Original_Theory.tex
\section{Theoretical analysis}
\label{sec:the}
Let $\boldsymbol{\Theta}_n$ and $\mathcal A_n$ collect the node embeddings and activity parameters. For each hyperedge, let $R_{e_j}=(\boldsymbol\beta_{e_j},\rho_{e_j})$ and $\mathcal R_m=(R_{e_1},\ldots,R_{e_m})$, where $R_{e_j}\stackrel{\mathrm{i.i.d.}}{\sim}\mathbb P_R$. We consider $\widetilde m=m$ and study the estimation component
\begin{equation}
\begin{aligned}
\Delta_{(\boldsymbol\Theta_n,\mathcal A_n)}
&:=\mathbb E_{\mathbb P_{\mathcal R_m}}
\!\left[d_{\mathrm{KL}}\!\left(
\mathbb P_{\mathcal E\mid\mathcal R_m;\boldsymbol\Theta_n,\mathcal A_n}
\right.\right.\\[-2pt]
&\hspace{28mm}\left.\left.\|\,
\mathbb P_{\mathcal E\mid\mathcal R_m;\widehat{\boldsymbol\Theta}_n,\widehat{\mathcal A}_n}
\right)\right].
\end{aligned}
\label{eq:mai}
\end{equation}
The oracle identity and the complete KL bookkeeping decomposition are proved under Oracle KL Identity and KL Bookkeeping Identity in Supplementary Material~M.4. Let $\Lambda^\ast$ and $\widehat\Lambda$ denote the true intensity matrix and its clipped posterior mean estimator. Define
\begin{equation}
\begin{aligned}
L_{m,n}&=\log(m+n)+\log(1/\bar\lambda_{m,n}),\\
\epsilon_{m,n}^2&=K(m\vee n)\bar\lambda_{m,n}L_{m,n}.
\end{aligned}
\label{eq:mai2}
\end{equation}
The intensity class, estimator, assumptions, and proofs are given under First-Order Estimation Error in Supplementary Material~M.4.

\setcounter{theorem}{2}
\begin{theorem}
\label{thm:bri}
Under Assumption~M.2, let $\widehat\Lambda$ satisfy
\begin{equation}
c_1\bar\lambda_{m,n}
\le
\widehat\lambda_{ij}
\le
C_1\bar\lambda_{m,n},
\qquad \forall(i,j),
\end{equation}
where $0<c_1\le C_1<\infty$ and $C_1\bar\lambda_{m,n}\le1/2$. Then
\begin{equation}
\frac{\Delta_{(\boldsymbol\Theta_n,\mathcal A_n)}}{nm}
\lesssim
\frac{\|\widehat\Lambda-\Lambda^\ast\|_F^2}
{nm\bar\lambda_{m,n}}.
\end{equation}
\end{theorem}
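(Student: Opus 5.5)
The plan is to reduce $\Delta_{(\boldsymbol\Theta_n,\mathcal A_n)}$ to a sum of entrywise Bernoulli KL divergences, and then bound each one by a $\chi^2$-type quantity. Condition~\ref{cond:1} says that, given $\mathcal R_m$ and the node-side factors, all incidences are independent. Hence the conditional law $\mathbb P_{\mathcal E\mid\mathcal R_m;\boldsymbol\Theta_n,\mathcal A_n}$ is a product of Bernoulli laws with success probabilities $p^\ast_{ij}=1-e^{-\lambda^\ast_{ij}}$ given by the link (\ref{eq:lin}), and similarly $\widehat p_{ij}=1-e^{-\widehat\lambda_{ij}}$ under the estimated node side. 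KL tensorizes over products, so
\begin{equation*}
\Delta_{(\boldsymbol\Theta_n,\mathcal A_n)}=\mathbb E_{\mathbb P_{\mathcal R_m}}\Big[\sum_{i,j} d_{\mathrm{KL}}\big(\mathrm{Ber}(p^\ast_{ij})\,\|\,\mathrm{Ber}(\widehat p_{ij})\big)\Big].
\end{equation*}
Here I take the oracle KL identity of Supplementary Material~M.4 as given, so that the expectation over $\mathcal R_m$ can be carried along with $\Lambda^\ast$ and $\widehat\Lambda$ evaluated at the same $\mathcal R_m$. The Frobenius norm on the right is then read inside this expectation, or the bound is stated conditionally, which suffices.

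For each entry I would use the standard inequality $d_{\mathrm{KL}}(\mathrm{Ber}(p)\|\mathrm{Ber}(q))\le \chi^2=\frac{(p-q)^2}{q(1-q)}$. Two estimates then control the pieces.

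\textbf{Numerator.} The map $\lambda\mapsto 1-e^{-\lambda}$ is $1$-Lipschitz on $\mathbb R_+$, so $(p^\ast_{ij}-\widehat p_{ij})^2\le(\lambda^\ast_{ij}-\widehat\lambda_{ij})^2$.

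\textbf{Denominator.} The hypothesis gives $\widehat\lambda_{ij}\in[c_1\bar\lambda_{m,n},C_1\bar\lambda_{m,n}]$ with $C_1\bar\lambda_{m,n}\le 1/2$. On $[0,1/2]$ we have $1-e^{-x}\ge x(1-x/2)\ge \tfrac34 x$, so $\widehat q_{ij}:=\widehat p_{ij}\ge \tfrac34 c_1\bar\lambda_{m,n}$. Also $1-\widehat q_{ij}=e^{-\widehat\lambda_{ij}}\ge e^{-1/2}$. Therefore $\widehat q_{ij}(1-\widehat q_{ij})\gtrsim c_1\bar\lambda_{m,n}$, with a constant depending only on $c_1$.

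Summing these entrywise bounds gives $\Delta\lesssim \|\widehat\Lambda-\Lambda^\ast\|_F^2/\bar\lambda_{m,n}$, and dividing by $nm$ yields the claim.

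The main obstacle is the bookkeeping between the random $\mathcal R_m$ and the entries $\lambda_{ij}$. The hypothesis on $\widehat\Lambda$ must hold for the realized hyperedge factors, or almost surely under $\mathbb P_R$. I would state it that way, presumably as Assumption~M.2 provides, and interpret $\|\widehat\Lambda-\Lambda^\ast\|_F^2$ accordingly.

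Notably, $\Lambda^\ast$ itself needs no lower bound. The $\chi^2$ bound only requires the second argument $\widehat q$ to be bounded away from $0$ and $1$, which is exactly why the clipping of $\widehat\Lambda$ is imposed. The rest is routine.
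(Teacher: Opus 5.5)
Your proposal is correct and follows essentially the same route as the paper's argument. You tensorize the conditional KL over the conditionally independent Bernoulli incidences, bound each term by the Bernoulli $\chi^2$ divergence $(p^\ast_{ij}-\widehat p_{ij})^2/(\widehat p_{ij}(1-\widehat p_{ij}))$, control the numerator with the $1$-Lipschitz property of $\lambda\mapsto 1-e^{-\lambda}$, and use the two-sided clipping with $C_1\bar\lambda_{m,n}\le 1/2$ to obtain $\widehat p_{ij}(1-\widehat p_{ij})\gtrsim c_1\bar\lambda_{m,n}$. Reading $\|\widehat\Lambda-\Lambda^\ast\|_F^2$ inside the $\mathbb P_{\mathcal R_m}$-expectation, with the clipping required for almost every realization of $\mathcal R_m$, is the right way to make the bookkeeping precise.
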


Theorem~\ref{thm:bri} reduces the conditional generation error to intensity estimation. In particular, $\|\widehat\Lambda-\Lambda^\ast\|_F^2=o_p(nm\bar\lambda_{m,n})$ implies $\Delta_{(\boldsymbol\Theta_n,\mathcal A_n)}/nm=o_p(1)$. The remaining analysis therefore focuses on the contraction rate of $\widehat\Lambda$.

\begin{theorem}
\label{thm:fir}
Suppose Assumption~M.2 and Condition~M.7 hold, $\gamma_{m,n}^2=o(\epsilon_{m,n}^2)$, and the posterior tail satisfies Eq.~(M.113). For fixed $K$ as $m,n\to\infty$,
\begin{equation}
\frac{\Delta_{(\boldsymbol\Theta_n,\mathcal A_n)}}
{nm\bar\lambda_{m,n}}
=
O_p\!\left(
\frac{K L_{m,n}}{m\wedge n}
\right).
\end{equation}
\end{theorem}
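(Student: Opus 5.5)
The plan is to combine Theorem~\ref{thm:bri} with a posterior contraction rate for $\widehat\Lambda$ in Frobenius norm. By Theorem~\ref{thm:bri}, applicable because the clipped posterior mean satisfies the two-sided bound $c_1\bar\lambda_{m,n}\le\widehat\lambda_{ij}\le C_1\bar\lambda_{m,n}$ by construction of the clipping, we have
\begin{equation*}
\frac{\Delta_{(\boldsymbol\Theta_n,\mathcal A_n)}}{nm\bar\lambda_{m,n}}
\lesssim
\frac{\|\widehat\Lambda-\Lambda^\ast\|_F^2}{nm\bar\lambda_{m,n}^2}.
\end{equation*}
It therefore suffices to show $\|\widehat\Lambda-\Lambda^\ast\|_F^2=O_p(\bar\lambda_{m,n}\epsilon_{m,n}^2)$. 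Given $\epsilon_{m,n}^2=K(m\vee n)\bar\lambda_{m,n}L_{m,n}$, this yields the ratio $K(m\vee n)L_{m,n}/(nm)=KL_{m,n}/(m\wedge n)$, which is exactly the claimed rate. (If the contraction is instead stated in the Hellinger/KL geometry of the Bernoulli link, the natural target is $\|\widehat\Lambda-\Lambda^\ast\|_F^2\lesssim \bar\lambda_{m,n}\epsilon^2_{m,n}$, since on the band $\lambda\asymp\bar\lambda$ the squared Hellinger distance between Bernoulli$(1-e^{-\lambda})$ laws is comparable to $(\lambda-\lambda')^2/\bar\lambda$.)

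The contraction would follow the Ghosal--Ghosh--van der Vaart template, applied to the posterior of $\Lambda$ given $\mathbf B$. There are three ingredients.

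\textbf{Prior mass.} Using the Gamma priors and the rank-$K$ factorization, I would show that a KL neighbourhood of $\Lambda^\ast$ of radius $\epsilon_{m,n}^2$ receives prior mass at least $e^{-C\epsilon_{m,n}^2}$. The argument perturbs each of the $K(n+m)$ factors within relative tolerance $\mathrm{poly}(1/(m+n))$, which costs $\log(m+n)+\log(1/\bar\lambda)$ per coordinate and explains $L_{m,n}$. Condition~M.7 presumably supplies bounded factors so that this perturbation controls $\lambda$ multiplicatively.

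\textbf{Sieve and tests.} On the sieve of rank-$K$ intensity matrices with entries in $[c\bar\lambda,C\bar\lambda]$, the metric entropy in the normalized Hellinger distance is $O(K(m+n)L_{m,n})$, by covering the factors. Standard Bernoulli likelihood-ratio tests then have errors $e^{-c\epsilon^2}$.

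\textbf{Tail.} The complement of the sieve is handled by the posterior-tail assumption Eq.~(M.113). The term $\gamma_{m,n}^2=o(\epsilon_{m,n}^2)$ absorbs the variational or clipping approximation gap, so that the variational posterior inherits the rate. Here I would use the standard variational contraction bound, in which the risk is at most $\epsilon^2$ plus the KL gap, which is $O(\gamma^2)$.

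Finally, I would pass from posterior contraction to the posterior mean via Jensen and the boundedness of the clipped entries. The contribution from outside the $\epsilon$-ball is at most $nm C_1^2\bar\lambda^2$ times an exponentially small posterior probability, which is negligible.

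The main obstacle is the prior-mass step: handling the sparse regime $\bar\lambda\to0$ uniformly and verifying that the KL divergence of the Poisson-link Bernoulli is $\lesssim(\lambda-\lambda')^2/\bar\lambda$ near $\Lambda^\ast$, while the Gamma priors place enough mass near small factor values. I would first try to bound the KL entrywise by $\chi^2$ using $C_1\bar\lambda\le1/2$, and then exploit the product structure of the prior.
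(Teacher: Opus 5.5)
\textbf{Verdict: the rate bookkeeping has a genuine gap.} Your overall route is the paper's: Theorem~\ref{thm:bri} reduces the claim to the Frobenius error of the clipped posterior mean, and that error comes from a variational contraction argument in which $\gamma_{m,n}^2$ absorbs the variational gap and Eq.~(M.113) controls the tail. Your reduction is also arithmetically right. The problem is that you run the contraction at a scale your ingredients cannot reach.

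Write $\bar\lambda,\epsilon,L$ for $\bar\lambda_{m,n},\epsilon_{m,n},L_{m,n}$. By your own parenthetical, your target $\|\widehat\Lambda-\Lambda^\ast\|_F^2=O_p(\bar\lambda\epsilon^2)$ is a total Bernoulli Hellinger/KL radius of $\epsilon^2=K(m\vee n)\bar\lambda L$. Your own pricing of the ingredients does not reach that radius:
\begin{itemize}
\item Prior mass: $K(m+n)$ factor coordinates at $\asymp L$ nats each give $-\log\Pi(\text{KL ball})\asymp K(m+n)L\asymp\epsilon^2/\bar\lambda$.
\item Entropy: your bound $O(K(m+n)L)$ is of the same order.
\end{itemize}
Both exceed $\epsilon^2$ by a factor $1/\bar\lambda$. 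So $\Pi\ge e^{-C\epsilon^2}$ and $\log N\lesssim\epsilon^2$ both fail once $\bar\lambda\to0$, which is exactly the regime the $\log(1/\bar\lambda)$ in $L$ is there for.

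No choice of prior or sieve repairs this. Each of the $K(m+n)$ factor coordinates is learned from $\asymp m$ or $\asymp n$ Bernoulli entries of variance $\asymp\bar\lambda$. An information (van Trees) count therefore puts the Frobenius risk at order $K(m+n)\bar\lambda$. That exceeds $\bar\lambda\epsilon^2=K(m\vee n)\bar\lambda^2L$ whenever $\bar\lambda L\to0$.

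\textbf{What your ingredients actually give.} Run consistently, they give total Hellinger radius $\epsilon^2/\bar\lambda=K(m\vee n)L$, i.e.\ $\|\widehat\Lambda-\Lambda^\ast\|_F^2=O_p(\epsilon^2)$. This is the scale on which $\epsilon_{m,n}$ is defined. The factor $\bar\lambda$ in $\epsilon^2$ is what turns the entropy $K(m\vee n)L$ into a Frobenius radius, because tests against $\|\Lambda-\Lambda^\ast\|_F\ge\epsilon$ have error $e^{-c\epsilon^2/\bar\lambda}$.

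Passed through Theorem~\ref{thm:bri}, this yields $\Delta_{(\boldsymbol\Theta_n,\mathcal A_n)}/(nm)\lesssim\epsilon^2/(nm\bar\lambda)=KL/(m\wedge n)$. That matches the paper's remark that $\|\widehat\Lambda-\Lambda^\ast\|_F^2=o_p(nm\bar\lambda)$ gives $\Delta_{(\boldsymbol\Theta_n,\mathcal A_n)}/nm=o_p(1)$. The extra factor $\bar\lambda$ needed for the normalization by $nm\bar\lambda$ in the display is available only when $\bar\lambda\asymp1$. Your parenthetical, which identifies the Hellinger radius with $\epsilon^2$ rather than $\epsilon^2/\bar\lambda$, is where that factor is gained without justification.

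\textbf{The last step also needs repair.} $\widehat\Lambda$ clips the posterior \emph{mean}, not the posterior draws. Gamma draws are unbounded, so an exponentially small posterior mass far out can still move $\mathbb{E}_q\lambda_{ij}$ by $\gtrsim\bar\lambda$; boundedness of the clipped output does not control this. Instead:
\begin{itemize}
\item Use that clipping onto a box containing $\Lambda^\ast$ is entrywise $1$-Lipschitz, so $\|\widehat\Lambda-\Lambda^\ast\|_F^2\le\|\mathbb{E}_q\Lambda-\Lambda^\ast\|_F^2\le\mathbb{E}_q\|\Lambda-\Lambda^\ast\|_F^2$.
\item Control the part of that last expectation lying outside the contraction ball through the moment-type tail condition Eq.~(M.113).
\end{itemize}
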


Thus the normalized estimation component vanishes when $K L_{m,n}=o(m\wedge n)$. For fixed $K$, the smaller of $m$ and $n$ governs the polynomial part of the rate, while $\bar\lambda_{m,n}$ enters through $L_{m,n}$. When $m\asymp n$, the bound becomes $O_p(KL_{m,n}/n)$. Complete statements and proofs appear as Theorems~M.6 and~M.14 under First-Order Estimation Error in Supplementary Material~M.4.